\newcommand{\norm}[1]{\left\lVert#1\right\rVert}
\newcommand{\J}{\mathcal{K}}
\DeclareMathOperator{\prob}{prob}
\newcommand{\ind}{\scalebox{1.15}{$\mathds{1}$}}
\newcommand{\reals}{\ensuremath{\mathbb{R}}}
\newcommand{\coloneq}{\stackrel{\textup{\tiny def}}{=}}
\def\shownotes{1}  \ifnum\shownotes=1
\newcommand{\authnote}[2]{$\ll$\textsf{\footnotesize #1 notes: #2}$\gg$}
\newcommand{\authnote}[2]{}
\newcommand{\al}[0]{\alpha}
\newcommand{\rh}[0]{\rho}
\newcommand{\Te}[0]{\Theta}
\newcommand{\om}[0]{\omega}
\newcommand{\iy}[0]{\infty}
\newcommand{\pa}[1]{\left( {#1} \right)}
\newcommand{\ve}[1]{\left\Vert {#1}\right\Vert}
\newcommand{\set}[2]{\left\{{#1}:{#2}\right\}}
\newcommand{\diag}{\operatorname{diag}}
\providecommand{\cal}[1]{\mathcal{#1}}
\renewcommand{\cal}[1]{\mathcal{#1}}
\newcommand{\pull}[9]{
#1\ar@/_/[ddr]_{#2} \ar@{.>}[rd]^{#3} \ar@/^/[rrd]^{#4} & &\\
& #5\ar[r]^{#6}\ar[d]^{#8} &#7\ar[d]^{#9} \\}
\newcommand{\cmp}[9]{
\xymatrix{
#1 \ar[r]^{#4}{#5} \ar@/_2pc/[rr]^{#8}_{#9} & #2 \ar[r]^{#6}_{#7} & #3
}
}
\newcommand{\ha}[1]{\ar@{^(->}[#1]}
\newcommand{\ls}[1]{\ar@{-}[#1]}
\newcommand{\sj}[1]{\ar@{->>}[#1]}
\newcommand{\aq}[1]{\ar@{=}[#1]}
\newcommand{\acir}[1]{\ar@{}[#1]|-{\textstyle{\circlearrowright}}}
\newcommand{\acil}[1]{\ar@{}[#1]|-{\textstyle{\circlearrowleft}}}
\newcommand{\ard}[1]{\ar@{.>}[#1]}
\newcommand{\mt}[1]{\ar@{|->}[#1]}
\newcommand{\inm}[1]{\ar@{}[#1]|-{\in}}
\newcommand{\inr}{\ar@{}[d]|-{\rotatebox[origin=c]{-90}{$\in$}}}
\newcommand{\inl}{\ar@{}[u]|-{\rotatebox[origin=c]{90}{$\in$}}}
\newcommand{\beq}[1]{\begin{equation}\llabel{#1}}
\newcommand{\eeq}[0]{\end{equation}}
\newcommand{\bal}[0]{\begin{align*}}
\newcommand{\eal}[0]{\end{align*}}
\newcommand{\ban}[0]{\begin{align}}
\newcommand{\ean}[0]{\end{align}}
\newcommand{\fixme}[1]{{\color{red}#1}}
\newcommand{\llabel}[1]{\label{#1}\text{\fixme{\tiny#1}}}
\newcommand{\arxiv}[1]{\url{http://www.arxiv.org/abs/#1}}
\DeclareFontFamily{U}{wncy}{}
    \DeclareFontShape{U}{wncy}{m}{n}{<->wncyr10}{}
    \DeclareSymbolFont{mcy}{U}{wncy}{m}{n}
    \DeclareMathSymbol{\Sh}{\mathord}{mcy}{"58} 
\theoremstyle{plain}
\newtheorem{theorem}{Theorem}
\newtheorem{proposition}{Proposition}
\newtheorem{assumption}{Assumption}
\title{Robust Spectral Filtering \\ and Anomaly Detection}
\author{
  Jakub Marecek and Tigran Tchrakian \\
  IBM Research -- Ireland \\
  \texttt{\{jakub.marecek,tigran\}@ie.ibm.com}
}
\begin{document}

\maketitle

\begin{abstract}
We consider a setting, where the output of a linear dynamical system (LDS) is, with an unknown but fixed probability, replaced by noise. 
There, we present a robust method for the prediction of the outputs of the LDS
and identification of the samples of noise, and prove guarantees on its statistical performance.
One application lies in anomaly detection: the samples of noise, unlikely to have been generated by our estimate of the unknown dynamics, 
can be flagged to operators of the system for further study.
\end{abstract}

\section{Introduction}

Across mathematics, statistics \cite{shumway1982approach}, artificial intelligence \cite{narendra1990identification}, and engineering \cite{ljung1998system,van2012subspace}, much attention has been devoted to the identification of linear dynamical systems (LDS):
\begin{align}
	\label{simplest}
h_k &= A h_{k-1} + B x_k + \eta_k \\
y_k &= C h_k     + D x_k + \zeta_k, \notag 
\end{align}
where $x_k \in \reals^{n}$ are inputs, $y_k \in \reals^{m}$ are outputs, $h_k \in \reals^{d}$ is a hidden (latent) state,
$A,B,C,D$ are compatible matrices, and $\eta_k, \zeta_k$ are compatible noise vectors with $\sum_{k=1}^T \norm{\eta_k}^2 + \norm{\zeta_k}^2<L$.
In improper learning of such an LDS (which we refer to as an identification problem), one wishes to estimate $\hat y_k$ 
 such that $\hat y_k$ are close to the best estimates $y^*_k$ of $y_k$
 possible at time $k$.
When there is no hidden state, the identification problem is convex and a variety
of methods work well.
When there is a hidden state, the problem is non-convex and only rather recently
 spectral filtering \cite{hazan2017online,hazan2018spectral} has been used to obtain 
identification procedures with regret bounded by $\tilde{O}(\log^7 \sqrt{k} )$ at time $k$,
where 
$\tilde O(\cdot)$ hides terms that depend polynomially on the dimension of the system and
 norms of the inputs and outputs and the noise.

We consider a Huber-like \cite{huber1981robust} setting, where 
with a fixed probability $p > 0$, which may be known or unknown,
 the  observations $y_k$ are replaced by noise.
That is, we have:
\begin{align}
	\label{robust}
h_k &= A h_{k-1} + B x_k + \eta_k \\
y_k &= \begin{cases}
\xi_k & \textrm{with probability } p \\
C h_k     + D x_k + \zeta_k, & \textrm{otherwise}
\end{cases} \notag 
\end{align}
under assumptions described out in the next section. 
Our goals are two-fold: first, to predict $\hat y_{k+1}$ of
 $C h_{k+1} + D x_{k+1} + \zeta_{k+1}$.
Our second goal is to identify when $\xi_k$ corrupts the observations such that it can be flagged for further study by operators of the system in the spirit of anomaly detection.
We stress that this Huber-like model differs from the settings for both additive and non-additive changes surveyed in~\cite{Basseville1993}, where the additive changes are the changes in the mean of the distribution of the observed signals and non-additive changes are related to changes in variance, correlations, spectral characteristics, or dynamics of the signal or system. 

Overall, our contributions are as follows:
\begin{itemize}
	\item We present a novel Huber-like model for anomaly detection.
	\item We present algorithms combining spectral filtering and additive-decrease multiplicative-increasing (ADMI) for the related robust identification problem.
	\item We present conditions that allow for the identification of the corrupting noise and the underlying LDS.
\end{itemize}
Notice that our analytical results for the Huber-like model are stronger than those surveyed in~\cite{Basseville1993} in three ways.
First, we suggest what is the absolute value of the difference between samples of $\xi_k$ and the non-corrupted observation $C h_k     + D x_k + \zeta_k$
sufficient to detect that an anomaly occurred at time $k$.
Second, we provide guarantees on the regret of our estimate of the subsequent observation of $C h_k     + D x_k + \zeta_k$, under the conditions, 
where anomalies are detectable.
Third, in contrast to the usual assumption of a Gaussian process noise $\eta_k$ and masurement noise $\zeta_k$, 
we allow for arbitrarily-distributed, but bounded noise
$\sum_{k=1}^T \norm{\eta_k}^2 + \norm{\zeta_k}^2<L$,
or equivalently, 
a bounded amount of adversarial perturbations to the system.
We hence believe that our model and the results make for a valuable addition to the literature on anomaly
 detection based on 
non-additive changes.

\section{The Problem}

As has been suggested in the previous section, we consider the problem of predicting $\hat y_{k+1}$ in 
the Huber-like extension of LDS \eqref{robust},
 under several assumptions, starting with the identifiability of Hazan et al. \cite{hazan2018spectral}:

\begin{assumption}
	\label{ass1}
	The outputs are generated by the stochastic difference equation \eqref{robust}, assuming:
\begin{enumerate}
\item Inputs and outputs are bounded: $\norm{x_t}_2 \leq R_x, \norm{y_t}_2 \leq R_y$.
\item The system is Lyapunov stable, i.e., the largest singular value of $A$ is at most 1: $\rh(A) \leq 1$. 
\item $A$ is diagonalizable by a matrix with small entries: $A = \Psi \Lambda \Psi^{-1}$, with $\ve{\Psi}_F\ve{\Psi^{-1}}_F \le R_\Psi$. 
\item $B,C,D$ have bounded spectral norms: $\ve{B}_2,\ve{C}_2,\ve{D}_2 \le R_\Te$.
\item Let $S=\set{\al/|\al|}{\al\text{ is an eigenvalue of }A}$ be the set of phases of all eigenvalues of $A$. There exists a monic polynomial $p(x)$ of degree $\tau$ such that $p(\om)=0$ for all $\om\in S$, the $L^1$ norm of its coefficients is at most $R_1$, and the $L^\iy$ norm is at most $R_\iy$. 
\end{enumerate}
\end{assumption}

While the Assumption \ref{ass1} may seem restrictive, it essentially says that the 
system is identifiable \cite{1101375} and that eigenvectors of $A_i$ corresponding to 
larger eigenvalues are not linearly dependent.
Indeed, since Kalman \cite{kalman1963mathematical}, it is understood that 
from input-output measurements, only the part of the system that is controllable and observable
can be identified, 
while one can clearly achieve \cite{simchowitz2018learning} a near-perfect prediction $\hat y$ of the output for an unstable system.
Further, it is clear \cite{trefethen2013approximation} that the polynomial $p(x)$ does exist, 
and we only introduce the notation for the norms of its coefficients.

We also make assumptions concerning the sparse noise, i.e., distribution of $\xi_k$ and probability $p$. 
Ideally, one would like to consider:

\begin{assumption}
	\label{ass2}
Probability $p < 1$ is not known and the noise $\xi_k$ is arbitrarily distributed.
\end{assumption}


We discuss Assumption \ref{ass2} in Section \ref{sec:anal}. Notice, however, that some samples 
of the arbitrarily distributed noise may be indistinguishable from the output of the 
linear dynamical system. 
For a strong result on the identification of the LDS, 
we consider a separation condition:

\begin{assumption}
	\label{assDk}
Probability $p < 1$ is not known.
At time $k$, the absolute value of the difference between the noise $\xi_k$ and $C h_k + D x_k + \zeta_k$ is greater than
some instance- and algorithm-specific $D_k$.
\end{assumption}

In Section \ref{sec:anal}, we prove the existence of $D_k$, which goes to 0 in the large limit of $k$.
Under this assumption, we can also estimate the unknown $p$.

\section{The Algorithms}

At a high-level, we suggest to use Algorithm~\ref{alg:schema} under Assumption \ref{ass1}.
At the current time $t$, the algorithm has the history of inputs, $x_1 \ldots x_t$, and the outputs, $y_1 \ldots y_{t-1}$, available. 
Based on the current input, $x_t$, the algorithm produces a forecast, $\hat y_t$, after which the output, $y_t$, of the (possibly corrupted) real system is observed.
We then test whether the loss $\norm{y_t - \hat y_t}^2$
is less than a certain threshold $D_t$. 
If it is, we assume that $y_t$ was generated by the LDS
and use it in further predictions.
Otherwise, we assume that the value is a sample of $\xi_t$
and do not use it for further predictions.  
In the next section, Proposition \ref{HazanLike} shows that
there is a $D_t$ of Assumption \ref{assDk} decreasing to 0 in the large limit of 
$t$ at a rate of $\tilde O(t^{-1/2}\log^7(t))$,
which makes this schema meaningful.

In particular, we consider Algorithm~\ref{alg:AIMD} based on additive-decrease multiplicative-increase (ADMI),
under Assumptions \ref{alg:schema} and \ref{assDk}.
There, we consider $D_t$ of the form
$\textrm{mean}(L_k) + c \textrm{std}(L_k)$,
where $L_k$ are losses for predictions of values generated by the LDS,
mean is the arithmetic mean, 
std is the standard deviation,
and $c$ is a coefficient greater or equal to $1.0$. 
Subsequently, we update $c$ using $\alpha > 0, \beta > 1.0$ as follows:
\begin{align}
	c \leftarrow & \begin{cases} \beta c      & \textrm{ if } l_t > D_t \\
                          c - \alpha          & \textrm{ otherwise. } \end{cases}
\end{align}
That is: when we detect an anomaly, we raise the threshold for detecting anomalies
relative to the losses observed so far.
Otherwise, we decrease the threshold
relative to the losses observed so far.
Similar policies are widely used \cite{corless2016aimd} for congestion management in TCP/IP networking and distributed 
resource allocation. 
Again, we analyse this approach in the next section, specifically in Theorem~\ref{IFS1}.

Throughout both Algorithms~\ref{alg:schema} and \ref{alg:AIMD},
we predict the next output $\hat y_t$ of the system 
from inputs $X_t$ until time $t$ and outputs $Y_{t-1}$ until $t-1$
in an online fashion.
There, leading methods \cite{hazan2017online,hazan2018spectral,6426006,hardt2016gradient,simchowitz2018learning}
consider an overparametrisation, where the vector $\tilde X_t$ is composed of 
the inputs to the system at all time-levels up to the current one, convolved with the eigenvectors of a certain Hankel matrix, as well as the outputs at the previous time level, and inputs at the current and previous time levels. 
Notice that the Hankel matrix is constant and its eigenvectors can be precomputed.
See 
\cite[e.g.]{Verhaegen1992,Overschee1994,Fazel2001,Liu2010,Smith2012} for background, 
\cite{hazan2018spectral} for the detailed derivation of the method we use, 
and Algorithm \ref{alg:AIMD} for a sketch of our implementation. 
We note that the new hypothesis class $\hat{\cal{H}}$ arising from the over-parametrization \cite{hazan2018spectral}
has been shown by \cite{hazan2018spectral} to approximately contain the class of LDS 
satisfying Assumption \ref{ass1}, which makes it possible to derive regret bounds considering the convexification:
\begin{align}
\label{convexification}
f(M) = \sum\limits_{i=1}^t\norm{ y_i - M \tilde X_i}^2, \textrm{ where } M \in \hat{\cal{H}},
\end{align}
instead of the non-convex problem at each point in time. In theory, one has to consider
the convexifications growing with $T$, but in practice, windowing works well. 
Furthermore, one can apply on-line optimisation techniques, such as a small number
 of iterations of a coordinate descent between two time levels,
which benefit from the facts that the problem is strongly convex and
that the optimizer of \eqref{convexification} changes only modestly between two time levels.

\begin{algorithm}[t]
\caption{A schema of an algorithm for the setting of Assumption \ref{assDk}.}
\label{alg:schema}
\begin{algorithmic}[1]
\item[] \textbf{Input}: time horizon $T$ 
\STATE Initialize $M_t$ to suitable dimension. 
\FOR{$t = 1, \ldots, T$}
\STATE Form overparametrisation $\tilde X_t$ from inputs $X_t$ until time $t$ and outputs $Y_{t-1}$ until $t-1$ 
\STATE $\hat y_t \gets M_t \tilde X$  
\STATE Observe $y_t$ and compute $l_t := \norm{y_t - \hat y_t}^2$
\IF{$l_t > D_t$}
\STATE Update $M$ for next time-level
\STATE Update $D_{t+1}$, if needed
\STATE Consider $x_t, y_t$ for subsequent time levels
\ELSE 
\STATE Ignore $x_t, y_t$ for subsequent time levels
\STATE Update $D_{t+1}$, if needed
\ENDIF
\ENDFOR
\STATE Return $\hat y_T$
\end{algorithmic}
\end{algorithm}


\begin{algorithm}[t]
\caption{An algorithm for the setting of Assumption \ref{assDk} based on additive-decrease multiplicative-increase (ADMI).}
\label{alg:AIMD}
\begin{algorithmic}[1]
\item[] \textbf{Input}: time horizon $T$, data points $X_T, Y_{T-1}$, number $k$ of filters, pre-computed top $k$ eigenpairs  $\{(\sigma_j, \phi_j)\}_{j=1}^k$ of a certain matrix $Z_T$ 
\item[] \textbf{Output}: prediction $\hat y_{T}, \hat p$
\STATE Initialize $M_t$ to suitable dimension, initialise $L_t$ to empty list, initialise $c$ to 1, $e_0$ to 0
\FOR{$t = 1, \ldots, T$}
\STATE Form overparametrisation $\tilde X_t$ from inputs $X_t$ until time $t$, outputs $Y_{t-1}$ until $t-1$, and convolutions with 
pre-computed $\{(\sigma_j, \phi_j)\}_{j=1}^k$, as in \cite{hazan2018spectral} 
\STATE $\hat y_t \gets M_t \tilde X$ 
\STATE Observe $y_t$ and compute $l_t := \norm{y_t - \hat y_t}^2$
\STATE Set $D_t := \textrm{mean}(L_t) + c \textrm{std}(L_t)$
\IF{$l_t > D_t \textbf{ or } \textrm{ extra}(l_t, D_t, \hat p_t)$}
\STATE 
\label{line:optimisation}
Update $M$ for next time-level: $M_{t+1} \gets \arg\min\limits_{M} \sum\limits_{i=1}^t\norm{ y_i - M \tilde X_i}^2$  
\STATE Add $l_t$ to $L_t$
\STATE Update $c$ to $\beta c$
\STATE Set $e_t$ to $e_{t-1} + 1$
\ELSE
\STATE Update $c$ to $c - \alpha$
\STATE Set $e_t$ to $e_{t-1}$
\ENDIF
\STATE Estimate $\hat p_t$ as $e_t / T$
\ENDFOR
\STATE Predict and return $\hat y_{T}$ and $\hat p$
\end{algorithmic}
\end{algorithm}

\section{An Analysis}
\label{sec:anal}

We could start with a result based on the work of Hazan et al. \cite{hazan2018spectral}:

\begin{proposition}
\label{HazanLike}
Under Assumption \ref{ass1}, Algorithm~\ref{alg:schema} 
makes it possible to consider $D_k$ of Assumption \ref{assDk} decreasing to 0 in the large limit of $k$ at a rate of $\tilde O(k^{-1/2}\log^7(k))$,
where $\tilde O(\cdot)$ hides dependence on instance-specific constants.
Furthermore, this choice of $D_k$ allows for the 
perfect recovery of the probability $p$ of \eqref{robust}.
Furthermore, this allows for the perfect recovery of the non-corrupted entries.
\end{proposition}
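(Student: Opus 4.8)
The plan is to reduce the statement to the spectral-filtering regret guarantee of Hazan et al.\ \cite{hazan2018spectral} and then convert the resulting \emph{cumulative} error control into a per-step threshold separating corrupted from uncorrupted observations. First I would restrict attention to the subsequence of time steps on which $y_t$ is \emph{not} overwritten by $\xi_t$. On that subsequence the observations obey \eqref{simplest}, so by Assumption \ref{ass1} the convexified hypothesis class $\hat{\cal H}$ of \eqref{convexification} approximately contains the true LDS and the online predictor $M_t$ has regret $\tilde O(\sqrt{k}\log^7 k)$ against the best fixed $M\in\hat{\cal H}$. Because the comparator's loss on clean steps is controlled only by the bounded process and measurement noise $\sum_k \norm{\eta_k}^2+\norm{\zeta_k}^2 < L$, its best-in-hindsight average loss is $O(L/k)$; adding the averaged regret gives a mean clean loss of order $\tilde O(k^{-1/2}\log^7 k)$. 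I would take this as $D_k$, up to the square root needed to pass between the squared loss $\norm{y_t-\hat y_t}^2$ thresholded in Algorithm \ref{alg:schema} and the absolute separation $|\xi_k-(Ch_k+Dx_k+\zeta_k)|$ of Assumption \ref{assDk}; this yields the claimed $k^{-1/2}\log^7 k$ decay.

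The next step is to check that this $D_k$ genuinely separates the two regimes. On a clean step the loss is at most $D_k$ by construction. On a corrupted step for which Assumption \ref{assDk} holds, the predictor $\hat y_k$ still tracks the true output $Ch_k+Dx_k+\zeta_k$ to within $\tilde O(\sqrt{D_k})$, so by the triangle inequality $\norm{\xi_k-\hat y_k}$ exceeds the separation margin minus the tracking error, hence exceeds the threshold once $k$ is large enough that $D_k$ has fallen below that margin. Thus the test $l_k>D_k$ flags precisely the corrupted steps, which is the required classification. Given correct classification, both recovery claims follow quickly: perfect recovery of the uncorrupted entries is immediate, as the complement of the flagged set is exactly the set of clean observations on which $\hat y_k\to Ch_k+Dx_k+\zeta_k$; and for $p$, the corruption indicators are i.i.d.\ Bernoulli$(p)$ independent of the dynamics, so once the flagged set coincides with the corrupted set the count $e_T$ obeys $e_T/T\to p$ almost surely by the strong law of large numbers, making $\hat p=e_T/T$ consistent.

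The main obstacle is the gap between the cumulative regret bound, which controls only $\sum_t l_t$, and the per-step inequality $l_k\le D_k$ that the detection test actually requires: a single clean step could in principle incur a loss far above the running average. This is where the stability parts of Assumption \ref{ass1}---Lyapunov stability $\rho(A)\le 1$, the diagonalisability bound $R_\Psi$, and the annihilating polynomial $p(x)$ for the eigenvalue phases---must enter, since they bound the influence of any single past observation on the current prediction and let the per-step error inherit the average rate; alternatively I would phrase ``perfect recovery'' asymptotically, showing that the fraction of misclassified steps vanishes by concentration even if finitely many steps violate the pointwise bound. A secondary subtlety is the feedback loop: because flagged anomalies are discarded before $M$ is updated, one must verify that an early misclassification cannot derail the learning, which I would control using the bounded-noise budget $L$ to cap the total damage.
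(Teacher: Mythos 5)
Your proposal takes essentially the same route as the paper: the paper's entire proof consists of writing down the cumulative error bound $\tilde O\left(R_1^3 R_x^2 R_\Theta^4 R_\Psi^2 R_y^2 d^{5/2} n \log^7 k \sqrt{k}\right) + O\left(R_\infty^2 \tau^{3} R_\Theta^2 R_\Psi^2 L\right)$ and asserting that it ``follows directly from Theorem 19 of Hazan et al.''~\cite{hazan2018spectral}, which is exactly the reduction you perform. If anything, your write-up is the more rigorous of the two: the obstacle you isolate---that a regret bound controls only the \emph{cumulative} loss on clean steps, so a single clean step may exceed any threshold of order $\tilde O(k^{-1/2}\log^7 k)$, and that discarding flagged observations feeds back into the learner---is genuine, and the paper's one-line proof passes over both points in silence rather than resolving them.
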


\begin{proof}
We want to show that:
	\begin{align}
	\sum_{k}	\| C h_k     + D x_k + \zeta_k - \xi_k \| \le \tilde O\pa{R_1^3 R_x^2 R_\Theta^4 R_\Psi^2 R_y^2 d^{5/2} n \log^7 k \sqrt{k}} + 
	O(R_\iy^2 \tau^{3} R_\Te^2 R_\Psi^2 L),
	\end{align}
	where the $\tilde O(\cdot)$ suppresses factors polylogarithmic in $n,m,d,R_\Theta,R_x,R_y$ 
	allows for the perfect recovery of the non-corrupted entries.  
	Recall that the parameters $R_1$, $R_x$, $R_\Theta$, $R_\Psi$, and $R_y$ are those in Assumption~\ref{ass1}, 
	$d$ is the dimension of the hidden state space, 
	$n$ is the dimension of the input space,
	and $k$ is the number of time steps. 
This follows directly from Theorem 19 of Hazan et al. \cite{hazan2018spectral}.
\end{proof}

Notice that this result is limited in two ways: 
First, it is not constructive, because the instance-dependent terms, based on the constants in Assumption \ref{ass1}, 
such as bounds on the spectral norms of matrices, are unknown \emph{a priori} and non-trivial to estimate on-line.
Second, one may wish for the width of the interval to scale with (the square root of the second moment of the) losses obtained so far, because
in many practical situations, the actual losses may be less than our analytical upper bound thereupon (and the second moment is important in the confidence estimates). 
To address these issues, we consider a policy, which dynamically adapts $D_k$ based on the additive-decrease multiplicative-increase 
(ADMI) updates:

\begin{theorem}
\label{IFS1}
Under Assumption \ref{ass1}, Algorithm~\ref{alg:AIMD} even with $\textrm{extra}(L_t, D_t, \hat p_t)$ 
being a constant function returning False, 
makes it possible to compute $D_t$ of Assumption \ref{assDk} scaling linearly with the mean of the losses observed so far.
Furthermore, this choice of $D_t$ allows for 
our estimate $\hat p_t$ of the probability $p$ \eqref{robust} to 
converge in distribution as $t\rightarrow\infty$.
\end{theorem}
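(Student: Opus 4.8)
The plan is to separate the statement into its two assertions and handle them on different timescales. The first assertion, that Algorithm~\ref{alg:AIMD} computes a $D_t$ scaling linearly with the mean of the observed losses, is essentially structural: by construction $D_t = \textrm{mean}(L_t) + c\,\textrm{std}(L_t)$, so the only thing to verify is that this quantity is a legitimate separator in the sense of Assumption~\ref{assDk}. For this I would invoke Proposition~\ref{HazanLike}: on the non-corrupted indices the prediction error $l_t = \norm{y_t - \hat y_t}^2$ decays at the rate $\tilde O(t^{-1/2}\log^7 t)$, so $\textrm{mean}(L_t)$ and $\textrm{std}(L_t)$ are controlled and the base level $\textrm{mean}(L_t)$ sits well below the loss induced by a corruption $\xi_t$ once $t$ is large. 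Hence a $D_t$ of the stated form does separate clean from corrupted observations for large $t$, with the coefficient $c$ absorbing the finite-sample slack.

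For the second assertion I would model the coefficient process $(c_t)$ as a Markov chain and read off the convergence of $\hat p_t = e_t/t$ from its ergodic behaviour. First I would use Proposition~\ref{HazanLike} to pass to a two-timescale regime: the loss statistics $\mu_t := \textrm{mean}(L_t)$ and $\sigma_t := \textrm{std}(L_t)$ converge, so that asymptotically the ADMI update
\begin{align*}
c_{t+1} = \begin{cases} \beta c_t & \textrm{if } l_t > \mu_t + c_t \sigma_t, \\ c_t - \alpha & \textrm{otherwise,} \end{cases}
\end{align*}
becomes a time-homogeneous recursion driven only by the Bernoulli$(p)$ corruption indicator and the (now stationary) clean-loss distribution. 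The crucial structural fact is that, by Assumption~\ref{ass1}, all losses are bounded: $l_t \le (R_y + \norm{M_t}\,\norm{\tilde X_t})^2 \le R_{\max}$ for a finite $R_{\max}$. Consequently there is a level $c^\star$ above which $\mu_t + c_t\sigma_t > R_{\max}$, so no detection can fire and $c_t$ can only decrease additively; this furnishes a negative Foster--Lyapunov drift for $c_t > c^\star$. Near the reflecting floor $c_t \ge 1$ the threshold is low, detections (true corruptions together with occasional false positives) fire with positive probability, and $c_t$ is pushed upward multiplicatively.

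With these two drifts in hand I would confine the chain to the compact interval $[1,\beta c^\star]$, verify irreducibility and aperiodicity (via a minorization condition) of the induced chain, and apply the Foster--Lyapunov criterion to conclude that $(c_t)$ is positive recurrent with a unique stationary law $\pi$. The detection indicator $\ind\{l_t > \mu_t + c_t\sigma_t\}$ is a bounded measurable functional of the chain, so the Markov-chain ergodic theorem gives $e_t/t \to \mathbb{E}_\pi[\ind\{\cdot\}]$ almost surely, and therefore $\hat p_t$ converges in distribution to this constant.

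The hard part will be the coupling between the two timescales: the chain $(c_t)$ must equilibrate while its driving parameters $\mu_t,\sigma_t$ are still drifting toward their limits. I would control this by treating $(\mu_t,\sigma_t)$ as a slowly varying perturbation and bounding the total-variation distance between the true inhomogeneous chain and the limiting homogeneous one by a function of $|\mu_t - \mu_\infty| + |\sigma_t - \sigma_\infty|$, which is summable-in-expectation thanks to the $\tilde O(t^{-1/2}\log^7 t)$ rate of Proposition~\ref{HazanLike}. A secondary subtlety is that false positives near the floor mean $\mathbb{E}_\pi[\ind\{\cdot\}]$ need not equal $p$ exactly, which is precisely why the conclusion is stated as convergence in distribution rather than exact recovery.
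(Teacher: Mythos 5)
Your treatment of the first assertion (that $D_t=\textrm{mean}(L_t)+c\,\textrm{std}(L_t)$ scales linearly in the mean of the observed losses, with Proposition~\ref{HazanLike} supplying the separation) is fine, and your route for the second assertion is genuinely different from the paper's: the paper casts the coefficient process as a recurrent iterated function system with maps $\omega_1(c)=\beta c$ and $\omega_2(c)=c-\alpha$, builds an underlying Markov chain by taking many copies of these two maps, and appeals to the Barnsley--Elton--Hardin average-contractivity theorem (Theorem~\ref{thm:markch}), whereas you argue via Foster--Lyapunov drift and the ergodic theorem for the chain $(c_t)$. The problem is that your construction does not survive its own premises. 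You invoke Proposition~\ref{HazanLike} twice, and the two uses contradict each other: in the first part you correctly use it to say the clean losses decay at rate $\tilde O(t^{-1/2}\log^7 t)$, hence $\mu_t=\textrm{mean}(L_t)\to 0$ and $\sigma_t=\textrm{std}(L_t)\to 0$; in the second part you treat $(\mu_t,\sigma_t)$ as converging to a \emph{non-degenerate} stationary clean-loss law driving a time-homogeneous limit chain. Since the actual limit is the point mass at zero, that limiting chain is degenerate; your cutoff $c^\star$, defined by $\mu_t+c\,\sigma_t>R_{\max}$, is not a constant but grows like $R_{\max}/\sigma_t\to\infty$; and consequently the compact interval $[1,\beta c^\star]$, the minorization on it, and the positive-recurrence conclusion all lose their footing. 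The coupling step also fails on its own terms: you claim $|\mu_t-\mu_\infty|+|\sigma_t-\sigma_\infty|$ is summable ``thanks to the $\tilde O(t^{-1/2}\log^7 t)$ rate,'' but $\sum_t t^{-1/2}\log^7 t$ diverges, so the total-variation perturbation bound is not available as stated.

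A second gap: your uniform bound $l_t\le R_{\max}$ must hold at \emph{corrupted} steps, i.e., it needs $\norm{\xi_t}$ bounded. Assumption~\ref{ass1} --- the only hypothesis of Theorem~\ref{IFS1} --- bounds the inputs and the LDS outputs, not the corrupting noise, which under Assumptions~\ref{ass2} and \ref{assDk} is arbitrary apart from the separation condition; without a tail condition on $\xi_t$, detections fire with non-vanishing probability at every level of $c$, and the negative drift above $c^\star$, the heart of your argument, is not established. A final symptom of overreach: you conclude almost-sure convergence of $\hat p_t$ to a constant, strictly stronger than the theorem's claim of convergence in distribution, and at odds with the paper's reading of its own result as legitimizing the study of a (possibly non-degenerate) limiting sample distribution. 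To be fair, the paper's own sketch leaves the verification of average contractivity vague, but your repair would at minimum require renormalizing the state (e.g., working with $l_t/\mu_t$ or $c_t\sigma_t/\mu_t$), adding a boundedness or tail assumption on $\xi_t$, and replacing summability by a genuinely slowly-varying-perturbation argument; none of these is cosmetic.
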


Theorem~\ref{IFS1} says that there is a distribution such that, as $t\rightarrow\infty$, $\hat p_t$ follows this distribution. 
In other words, $\hat p_t$ will not be totally erratic, but it will be predictable in the sense that it will eventually resemble
samples from a random variable with a fixed distribution. 
This legitimizes the use of simulations and studying the resulting sample distributions.
Furthermore, under mild but technical conditions, the convergence occurs at a geometric 
rate \cite[Theorem 1]{steinsaltz1999}, i.e.,
$\mathbb{E} \left[ \hat p_t - p \right]$ is $O(r^n)$, where the growth rate $r < 1$ 
can be made explicit by a careful analysis of the drift function.
Under further technical conditions, one could prove moment bounds  \cite{walkden2007invariance}.

\begin{proof}[Proof sketch] 
The process $\hat p_t$ can be cast as a recurrent iterated function system (RIFS) on the normed space $(\reals,\norm{\cdot}_1)$ with a family of functions $\bigl\{\omega_j \mid j\in\J
\bigr\}$ that take one of the following two forms:
\begin{align}
	\omega_1(c) &:= \beta c     & \quad \beta > 1.0 \\
	\omega_2(c) &:= c - \alpha  & \quad \alpha > 0
\end{align}
where clearly only $\omega_2$ is a contraction. 
We provide an overview of RIFS in Appendix \ref{sec:ifs} in the Supplementary material. 
By using a theorem of Barnsley et al. (restated as Theorem \ref{thm:markch} in the Supplementary material, for convenience), we want to show that the system converges in distribution.

In particular, the probability of applying $\omega_1$ at iteration $t+1$ is given by the probability of:
$l_k > D_k$ 
where $l_k$ is the loss $\norm{y_t - \hat y_t}^2$
and $D_k$ is a threshold. 
Our goal is hence to prove that there exists a Markov chain, with $K$ states and a transition probability matrix $P\in [0,1]^{K\times K}$, such that $\prob(i_{t+1}=j|i_t) = p_{i_t j}$, i.e., the probability of applying a specific $\omega_j$ depends on  the last applied function $\omega_{i_t}$.
Clearly, this is not true in case of $K = 2$, but one can consider an abitrary number of copies of the two functions,
and hence a much larger $K$.

Notice that even with the hidden state, the evolution of 
of the underlying $y_t$ in \eqref{simplest} can be easily modelled with a Markov chain.
Our goal is hence to show that the evolution of $\hat y_t$
can be modelled by a Markov chain. 
Although the solution of the convexification
\eqref{convexification} in Line \ref{line:optimisation} may seem non-linear, one should consider the
fact that 
the existence of a Moore–Penrose pseudoinverse guarantees that 
there exists a linear representation of the evolution of $y_t$.
The existence of a such a Markovian representation in turn guarantees that 
there exist instance-specific conditions 
such that the RIFS is contractive on average and by Theorem~\ref{thm:markch}, we then conclude that $\hat p_k$ converges in distribution.  




\end{proof}

Notice, however, that so far, 
we have not considered the setting of Assumption~\ref{ass2} and 
we have not made any use of the $\hat p_t$.
Let us now consider Assumptions \ref{ass1} and \ref{ass2} and a function
%
$\textrm{extra}(L_t, D_t, \hat p_t),$ 
which would from some time level $t_0$ onwards perform two actions:
First, consider the $t_0$ most recent entries in $L_t, D_t$ 
and compute an estimate of a probability of an anomaly based on such 
time-window of length $t_0$.
Second, estimate the probability that such an estimate is \emph{not} 
drawn from a sample distribution of $\hat p_t$.
This is motivated by the intuition that in case $l_t$ is less than $D_t$,
we may want to force a $\hat p_t$ fraction of any time-window to be an anomaly.
We conjecture that under Assumptions \ref{ass1} and \ref{ass2}, 
Algorithm~\ref{alg:AIMD} 
with such an extra
allows for the recovery of the non-corrupted entries,
with high probability in the large limits of $t, t_0$.
Intuitively, the proof may 
use the technique of the proof of Theorem~\ref{IFS1}
and a law of large numbers for non-identically 
distributed Bernoulli random variables, e.g., from Kolmogorov's strong law.
However, the proof would have to operate with a much larger state space of the Markov chain 
than the one used in the proof of Theorem~\ref{IFS1}, and the reasoning would be complicated by the
fact that until $t_0$, we may corrupt our estimate of the underlying LDS by 
mis-interpreting some elements of noise $\xi_t$ as outputs of the LDS, 
which Theorem~\ref{IFS1} avoids by Assumption \ref{assDk}. 

\section{Empirical Results}

To illustrate the performance of the algorithms, we chose the same 
single-input single-output (SISO)
system  as in~\cite{hazan2018spectral}, where:
\begin{align}
	\label{HazanEx}
B^\top = C = [1 \, 1], \quad D = 0, \quad A = \diag([0.999,0.5]),
\end{align}
time horizon $T=100$, and noise terms $\eta$ and $\xi$ are i.i.d. Gaussians. 
For comparison purposes, we consider the trivial last-value prediction,
also known as persistence-based prediction, which uses the most recent
value $\hat y_{t+1} := y_t$,
and the same thresholding of Algorithm \ref{alg:AIMD}.

First, we illustrate the performance on one sample run of the method.
Figure \ref{fig:io} presents the true inputs $x_t$ (in blue), true outputs $y_t$ (in green), 
which have been corrupted in 10\% of samples by noise $\mathcal{U}(0, 100)$
(as indicated by black vertical bars in the bottom third of the picture),
and predictions $\hat y_t$ of the output of our method (in red).
Below, Figure \ref{fig:events} presents the thresholds (in dotted lines),
the output of our method (in blue), the last-value prediction (in green, often overlapping with the red line),
and the corresponding anomalies as semi-transparent vertical bars. 
In particular, the pink vertical bars in the top third of the plot
correspond to anomalies detected by the last-value prediction and the
pale blue bars in the middle third of the plot correspond to the 
thi anomalies detected by our method.
In this one sample, with no statistical significance, the
harmonic mean of precision and recall (F1 score) of our method is 0.93,
while the last-value prediction results in F1 score of 0.56.
Notice that the uniformly-distributed noise 
$\mathcal{U}(0, 100)$ violates Assumption \ref{assDk}, and
indeed, the first detected anomaly does not differ from the previous
output of the LDS by much.

In Figure \ref{fig2}, we present  the 
loss $l_t := \norm{y_t - \hat y_t}^2$ of our method (in red) and the last-value prediction (in blue).
In particular, we plot the mean (in a solid line) and standard deviation (shaded).
While it is not possible to infer any generalisations from this one
 particular LDS, the F1 score of 0.88 of our method (averaged over the sample paths)
 improves considerably over the F1 score of 0.46 using the last-value predictions.

Next, we present perhaps an even more intriguing example, which is not supported by our theory.
In particular, we consider a time-varying system, where we vary $B_k$ from $k = 50$ 
such that each entry of $B_k$ follows a sinusoid:
\begin{align}
\begin{cases}
1 & \textrm{ if } k < 50, \\
1.01 + \sin( \frac{\pi (k-50)}{180}) & \textrm{ otherwise. }
\end{cases}	
	\label{timeVaryingEx}
\end{align}
Figures \ref{figtv1} and \ref{figtv2} clearly demonstrate that the performance of the
last-value prediction does not change materially, but 
that the performance of our method improves.
We have been able to replicate this behaviour on a variety of small examples.
This naturally opens the question as to whether one could prove regret bounds for
spectral filtering in a time-varying system, and consequently guarantee the performance
of the anomaly detection therein.


\begin{figure}[t!]
    \centering
    \begin{subfigure}[b]{0.49\textwidth}
        \includegraphics[page=5, width=\textwidth]{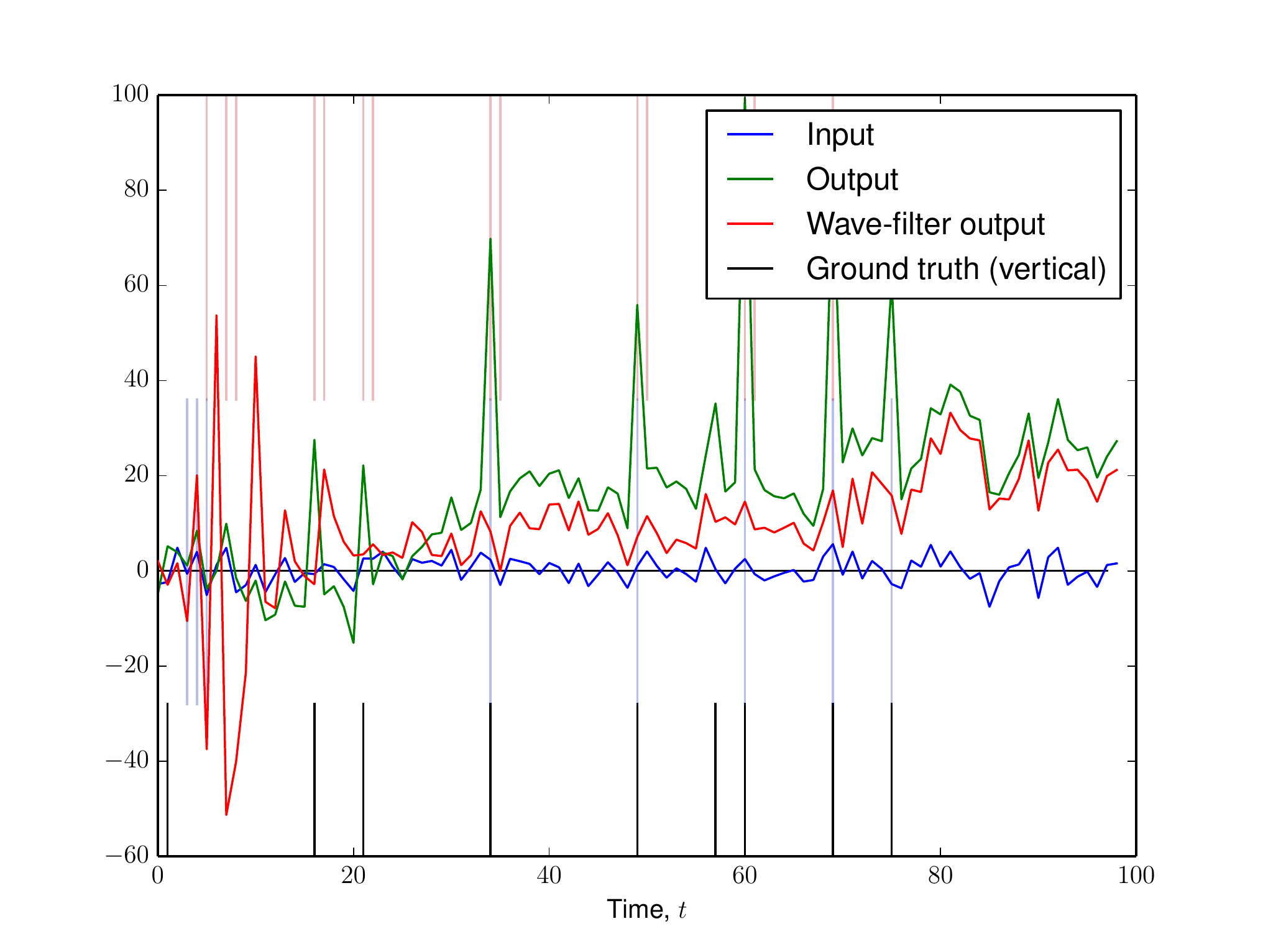}
        \caption{Inputs, outputs, and predictions of the output by our method.}
        \label{fig:io}
    \end{subfigure}
    \begin{subfigure}[b]{0.49\textwidth}
        \includegraphics[page=6,width=\textwidth]{figs/event-Hazan}
        \caption{Thresholds and anomalies detected.\newline}
        \label{fig:events}
    \end{subfigure}
    \caption{First illustrations on \eqref{HazanEx}.}\label{fig1}
\end{figure}

\begin{figure}[t!]
    \centering
     \includegraphics[page=11, width=0.49\textwidth]{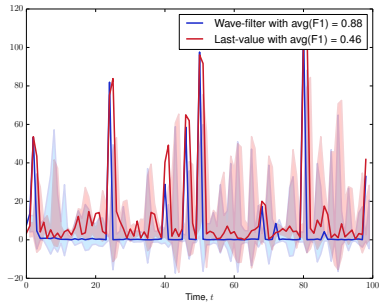}
    \caption{Mean and standard deviation of the loss $l_t$ on \eqref{HazanEx}.}\label{fig2}
\end{figure}

\begin{figure}[t!]
    \centering
    \includegraphics[page=5, width=0.49\textwidth]{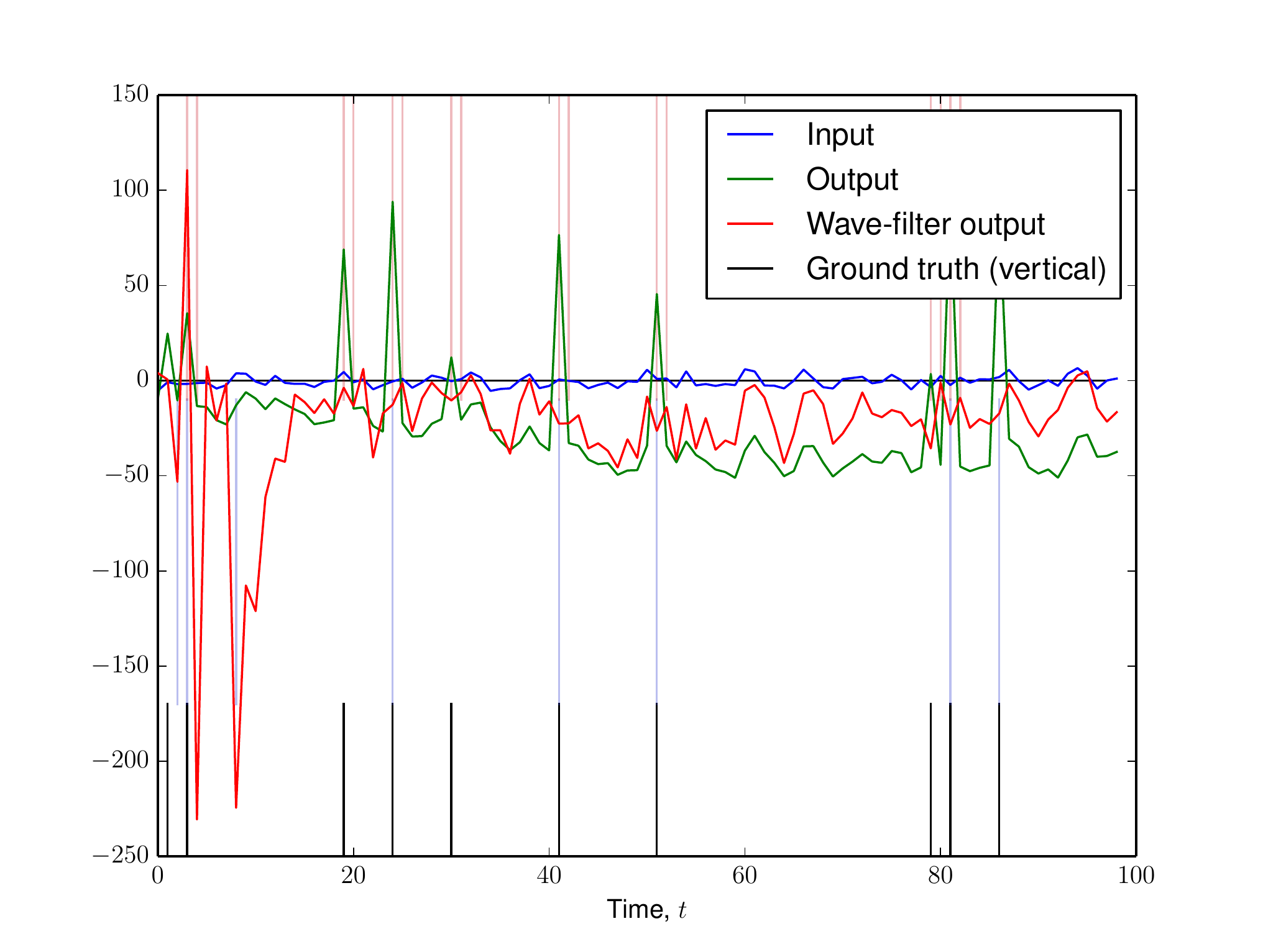}
     \includegraphics[page=6, width=0.49\textwidth]{figs/event-sinusoidalFrom50.pdf}
    \caption{Illustrations on the time-varying system \eqref{timeVaryingEx}. Left: Inputs, outputs, and predictions of the output by  our method. Right: Thresholds and anomalies detected.}
     \label{figtv1}
\end{figure}

\begin{figure}[t!]
    \centering
     \includegraphics[page=11, width=0.49\textwidth]{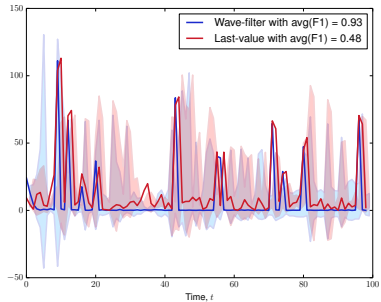}
    \caption{Mean and standard deviation of the loss $l_t$ on the time-varying system \eqref{timeVaryingEx}.}\label{figtv2}
\end{figure}

\section{Related Work}

In system identification, there are over 65 years of research \cite{kalman1963mathematical}, 
which we draw upon. 
Specifically, our spectral filtering follows the tradition of subspace methods for identification \cite[e.g.]{Verhaegen1992,Overschee1994,Fazel2001,Liu2010,Smith2012}. 
There, one consider ``wave filters'', which are based on convolving 
data with eigenvectors of a certain Hankel matrix $Z_T$.
Notice that the eigendecomposition of the Hankel matrix $Z_T$ can be pre-computed,
as the matrix does not depend on the input data.
In particular, we consider the regularised version of 
Hazan et al. \cite{hazan2017online,hazan2018spectral}. 
Several other authors \cite{6426006,hardt2016gradient,simchowitz2018learning}
have derived similarly important results at the same time. 
Subsequently, a number of authors  \cite{dean2017sample,abbasi2018regret,fazel2018global,feinberg2018model,arora2018towards,boczar2018finite}
have applied them to the (Linear-Quadratic, LQ) control of an unknown system, 
which underlies much of reinforcement learning. 

In anomaly detection \cite{chandola2009anomaly} and and closely related problems,
there is a possibly even longer history of related work. 
For anomaly detection in general, and especially for anomaly detection in LDS, the book of Basseville and Nikiforov \cite{Basseville1993} is the standard reference.
In particular, the closest to our work is Ting et al. \cite{ting2007kalman},
who employ Kalman filters \cite{kalman1963mathematical} in anomaly detection, i.e.,
 assume that $A, B$ are known. 
In contrast, we do not assume that $A, B$ are known.
Further, we should like to point to the closely related problems of deviation \cite{palpanas2003distributed}, or (on-line, complex) event \cite{yang1998study} detection, 
    outlier analysis \cite{aggarwal2015outlier}, detection \cite{subramaniam2006online}, or pursuit \cite{xu2010robust},  
    foreground detection \cite{BOUWMANS2014} and the complementary background subtraction or background maintenance, 
    or even dynamic anomalography \cite{Mardani2013}.

\section{Conclusions}

While anomaly detection is notoriously hard to benchmark, due to the fact that each application
has its own assumptions as to what is normal, we believe that the assumption of normal data being 
generated by an unknown linear dynamical system 
and anomalies replacing the observations arbitrarily 
\eqref{simplest} in a Huber-like fashion may have a broad appeal,
especially in conjunction with the methods with performance guarantees, 
which we have presented. 

There is a considerable scope for further work, including the rates of convergence
 \cite[cf. Theorem 1]{steinsaltz1999}, moment bounds  \cite[cf.]{walkden2007invariance},
 and extensions of the results to time-varying systems on the theoretical side, 
 and novel variants of the thresholding on the algorithmic side.
In particular, we envision that exponential smoothing and upper confidence bounds
 may well be worth investigating. 

\small
\bibliography{references,pursuit}
\bibliographystyle{plain}

\clearpage
\appendix

\section{Background on Iterated Function Systems}
\label{sec:ifs}
In a generalization of a Markov chain, known 
variously as  \textit{iterated function system} or  \textit{iterated random functions},
one has a state space $X$ with its metric $d$, a family $W$ of Lipschitz functions\footnote{%
	A function $f$ on the metric space $(X,d)$ is Lipschitz with constant $s$, or ``$s$-Lipschitz,'' if for all $x,y\in X$, we have $d\bigl(f(x),f(y)\bigr) \leq s d(x,y)$.
} $W=\{w_j:X\rightarrow X \, | \, j\in \J\}$, where $\J$ is some index set,
which we assume to be finite or countably infinite,
and a measure $\nu$ that makes $(\J,\cdot,\nu)$ a probability space. 

At each iteration $t$ of the iterated function system, $j$ is selected from $\J$ according to $\nu$ and $w_j$ is applied to the current state $x_k$ to obtain $x_{k+1}$. Formally:
\[
\prob(X_{k+1}\in A | X_k=x_k) \coloneq \sum_{\J} \ind_{\{i | w_i(x_k)\in A  \}}(j) \nu(j),
\]
i.e.\ the probability of $x_{k+1}$ ending up in a set $A$ is the probability of selecting an index $j$ such that $w_j(x_k)$ is in $A$ (the measure of the set of indices $j$ for which $w_j(x_k)$ is in $A$). Here the Markov property is clear: the distribution of the next state $X_{k+1}$ depends only on the current state $x_k$ and not any ``older'' states $x_{k-1}$ etc.

In this way, the IFS ``jumps'' around $X$. Unless we have a degenerate case such as all $w_j$ having the same fixed point, we can not expect the sequence $\{x_k\}_{k=0}^{\infty}$ to converge in a classical sense.
Instead, we can establish conditions for \textit{convergence in distribution}: that there is a distribution $\Pi$ on $X$ such that as $k\rightarrow\infty$, the set $\{x_0,x_1,\dotsc,x_k\}$ will be distributed according to $\Pi$.

\begin{theorem}[E.g. \protect{\cite[Thm. 1.1]{DiaconisFreedman1999}}] \label{thm:ifs}
	Let $L_j$ denote the Lipschitz constant of $w_j$ and assume that the IFS is \emph{contractive on average}, i.e.
	\begin{equation}
	\sum_{\J} \nu(j) \log(L_j)<0. \label{eq:contronavg}
	\end{equation}
	Then, there is a distribution $\Pi$ on $X$ such that $\{x_0,x_1,\dotsc,x_k\}$ is distributed according to $\Pi$ as $k\rightarrow\infty$.
\end{theorem}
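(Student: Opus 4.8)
The plan is to prove convergence in distribution through the \emph{backward iteration} device of Diaconis and Freedman. Let $j_1, j_2, \dotsc$ be the i.i.d.\ indices drawn from $(\J,\nu)$, so that the forward chain started at $x_0$ is $X_k = w_{j_k}\circ\cdots\circ w_{j_1}(x_0)$. The key observation is that for each fixed $k$ the \emph{backward} composition
\begin{equation}
Y_k \coloneq w_{j_1}\circ w_{j_2}\circ\cdots\circ w_{j_k}(x_0)
\end{equation}
has the same law as $X_k$, since $(j_1,\dotsc,j_k)$ and $(j_k,\dotsc,j_1)$ are identically distributed. Unlike the forward chain, the backward compositions stabilize, so it suffices to show that $Y_k$ converges almost surely to a random limit $Y_\infty$: its law is then the desired $\Pi$, and $X_k \Rightarrow Y_\infty$ is immediate.

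First I would show $\{Y_k\}$ is almost surely Cauchy. Writing $F_k \coloneq w_{j_1}\circ\cdots\circ w_{j_k}$, which is $L_{j_1}\cdots L_{j_k}$-Lipschitz, and peeling off the innermost map gives
\begin{equation}
d(Y_{k+1},Y_k) = d\bigl(F_k(w_{j_{k+1}}(x_0)),\, F_k(x_0)\bigr) \le L_{j_1}\cdots L_{j_k}\cdot d\bigl(w_{j_{k+1}}(x_0), x_0\bigr).
\end{equation}
Applying the strong law of large numbers to the i.i.d.\ summands $\log L_{j_i}$,
\begin{equation}
\tfrac{1}{k}\sum_{i=1}^{k}\log L_{j_i} \longrightarrow \sum_{\J}\nu(j)\log(L_j) < 0 \quad\text{almost surely},
\end{equation}
so, setting $-\delta \coloneq \sum_{\J}\nu(j)\log(L_j)$ with $\delta>0$, we get $L_{j_1}\cdots L_{j_k} \le e^{-(\delta/2)k}$ for all large $k$, almost surely. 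When $\J$ is finite, $d(w_{j_{k+1}}(x_0),x_0)$ is bounded by a constant, so $d(Y_{k+1},Y_k)$ is summable almost surely and $\{Y_k\}$ converges almost surely to a limit $Y_\infty$ that does not depend on $x_0$.

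Next I would identify and validate $\Pi \coloneq \operatorname{law}(Y_\infty)$. Conditioning on the first index $j_1$ and noting that $w_{j_2}\circ\cdots\circ w_{j_k}(x_0)$ converges almost surely to an independent copy $Y_\infty'$ of $Y_\infty$, we get $Y_\infty \stackrel{d}{=} w_{j_1}(Y_\infty')$; taking laws shows $\Pi$ is invariant for the IFS transition kernel of Appendix~\ref{sec:ifs}, while independence of the limit from $x_0$ gives uniqueness. Finally, since $X_k \stackrel{d}{=} Y_k \to Y_\infty$ almost surely and almost sure convergence implies convergence in distribution, we conclude $X_k \Rightarrow \Pi$, which is the assertion of Theorem~\ref{thm:ifs}.

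The hard part will be the case of a countably infinite index set $\J$, relevant if one allows arbitrarily many copies of $\omega_1,\omega_2$. There the increment $d(w_{j_{k+1}}(x_0),x_0)$ need not be bounded, so the almost sure geometric decay of $L_{j_1}\cdots L_{j_k}$ is no longer automatically enough to force summability of $d(Y_{k+1},Y_k)$. One must additionally impose and exploit a logarithmic moment condition such as $\sum_{\J}\nu(j)\log^{+} d(w_j(x_0),x_0) < \infty$ and combine it, via a Borel--Cantelli argument, with the exponential decay to recover the Cauchy property. For the finite state space actually used by Algorithm~\ref{alg:AIMD} this obstruction vanishes, so the backward device delivers the theorem directly.
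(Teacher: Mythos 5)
Your proposal cannot be compared against a proof in the paper, because the paper does not prove Theorem~\ref{thm:ifs} at all: it imports the result from Diaconis and Freedman, and the only proof given in Appendix~\ref{sec:ifs} is the one-line reduction of Theorem~\ref{thm:markch} to Barnsley--Elton--Hardin. What you have written is, however, a correct reconstruction of exactly how the cited result is proved: the backward-iteration device $Y_k = w_{j_1}\circ\cdots\circ w_{j_k}(x_0)$, the equality in law $Y_k \stackrel{d}{=} X_k$ obtained by reversing the i.i.d.\ indices, the almost sure geometric decay of $L_{j_1}\cdots L_{j_k}$ via the strong law applied to $\log L_{j_i}$, summability of the increments, and the passage from almost sure convergence of $Y_k$ to weak convergence of $X_k$ is precisely the argument of \cite[Thm.~1.1]{DiaconisFreedman1999}. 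Two hypotheses you use deserve emphasis. First, you implicitly need $(X,d)$ to be complete (and separable) so that the almost surely Cauchy sequence $Y_k$ actually has a limit; the appendix never states this, though it holds in the intended application $X=\reals$ with $\norm{\cdot}_1$. Second, your final paragraph correctly identifies that for a countably infinite $\J$, or more generally for unbounded displacements $d(w_j(x_0),x_0)$, one must add the logarithmic moment condition $\sum_{\J}\nu(j)\log^{+}d\bigl(w_j(x_0),x_0\bigr)<\infty$; this condition is in fact part of the hypotheses of Diaconis--Freedman's Theorem 1.1, so the paper's restatement is slightly too strong as written, and your proof pinpoints exactly where that missing hypothesis gets consumed. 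For the finite families of affine maps on $\reals$ relevant to Algorithm~\ref{alg:AIMD}, both issues are vacuous and your argument is complete as it stands.
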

If $W$ is a family of contractions, i.e.\ if $L_j<1$ for all $j$, then \eqref{eq:contronavg} is trivially satisfied.

One can generalise this notion further \cite{BarnsleyEltonHardin1989} to a \textit{recurrent iterated function system} (RIFS), which is an IFS with an underlying Markov chain that modifies $\nu$ at each time step. More precisely, we have an IFS as described in the last section with a finite index set $\J$, say $\J=\{1,\dotsc,K\}$. Additionally, there is a Markov Chain with $K$ states and transition probability matrix $P\in [0,1]^{K\times K}$. The probability of applying $w_j$ at iteration $k+1$ is now given by  $\prob(i_{k+1}=j|i_k) = p_{i_k j}$, i.e.\ the probability of applying a specific $w_j$ depends on what the last applied function $w_{i_k}$ was! This is in contrast to the basic case above, where the probability to select a specific $w_j$ was always the same and given by $\nu(j)$. Notice that the way that $X_k$ jumps around in $X$ now is \emph{not} a Markov process anymore -- the distribution of $X_k$ not only depends on $X_{k-1}$, but also on $i_{k-1}$ --- but the joint process of $(X_k,i_k)$ jumping around in $X\times \J$ is.

Results analogous to Theorem~\ref{thm:ifs} can be stated for this case, see e.g.\ \cite{barnsley1988invariant,BarnsleyEltonHardin1989}. We state
\begin{theorem}[\cite{BarnsleyEltonHardin1989}]
	\label{thm:markch}
	Assume we have an RIFS as described above, and let $m:\{1,\dots,K\}\rightarrow[0,1]$ denote the stationary distribution of the underlying Markov chain (i.e.\ $m$ corresponds to the normalized Perron eigenvector of $P^T$). Then, if
	\begin{equation}\label{eq:avgcontractive}
		\sum_{i=1}^K m(i) \log L_i = E_m\{ \log L_{i}  \} <0,
	\end{equation}
	there is a unique stationary distribution $\tilde{\nu}$ of the Markov process $(X_k,i_k)$ and $X_k$ converges in distribution to $\nu$ with $\nu(B) = \tilde{\nu}(B\times\J)$. Here, $L_i$ again denotes the Lipschitz constant of $w_i$, and $E_m$ denotes expected value with respect to $m$. 
\end{theorem}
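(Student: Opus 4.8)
The statement is the Barnsley--Elton--Hardin ergodic theorem for a Markov-modulated (recurrent) iterated function system, and the plan is to prove it by the \emph{backward-iteration} (``chaos game'') method rather than by a direct study of the forward chain. The joint process $(X_k,i_k)$ is Markov on $X\times\J$; I would reduce convergence in distribution of $X_k$ to the almost-sure convergence of a sequence of random compositions of the maps $w_j$, and then control the contraction rate of those compositions through the average-contractivity hypothesis \eqref{eq:avgcontractive} by means of Birkhoff's ergodic theorem. Throughout I assume $(X,d)$ is complete, as is standard for an IFS, and I use that the characterization of $m$ as the normalized Perron eigenvector of $P^{T}$ encodes irreducibility of the modulating chain.

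The steps, in order, are as follows. First I would start the index chain from its stationary law $m$, so that $(i_k)_{k\ge1}$ is a stationary ergodic sequence and hence so is $(\log L_{i_k})_{k\ge1}$, with mean $E_m\{\log L_i\}<0$. Next I would introduce the backward compositions $\Phi_k := w_{i_1}\circ\cdots\circ w_{i_k}$ and invoke the Lipschitz estimate $d\bigl(\Phi_k(x),\Phi_k(y)\bigr)\le\bigl(\prod_{j=1}^k L_{i_j}\bigr)\,d(x,y)$. Birkhoff's theorem gives $\tfrac1k\sum_{j=1}^k\log L_{i_j}\to E_m\{\log L_i\}<0$ almost surely, so $\prod_{j=1}^k L_{i_j}\to0$ geometrically fast almost surely. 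I would then show $\{\Phi_k(x_0)\}_k$ is almost surely Cauchy by telescoping, $d\bigl(\Phi_{k+1}(x_0),\Phi_k(x_0)\bigr)\le\bigl(\prod_{j=1}^{k}L_{i_j}\bigr)\,d\bigl(w_{i_{k+1}}(x_0),x_0\bigr)$, where the one-step displacement is bounded by $\max_{j\in\J}d\bigl(w_j(x_0),x_0\bigr)<\infty$ because $\J$ is finite; the geometric factor makes $\sum_k d\bigl(\Phi_{k+1}(x_0),\Phi_k(x_0)\bigr)$ converge almost surely, so $\Phi_k(x_0)\to X_\infty$ in the complete space $X$, with a limit independent of $x_0$ by the same contraction. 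Finally I would identify $\mathcal{L}(X_k)$ with $\mathcal{L}(\Phi_k(X_0))$ by passing to the time-reversed chain, whose stationary law is again $m$ and whose average-contractivity number is therefore unchanged; this yields $X_k\Rightarrow\nu$, where $\nu$ is the law of $X_\infty$. Lifting $X_\infty$ together with the stationary index produces an invariant law $\tilde\nu$ on $X\times\J$ with $\nu(B)=\tilde\nu(B\times\J)$, and uniqueness follows since the limit law does not depend on the initial state.

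The main obstacle is the passage from \emph{average} contraction to \emph{almost-sure geometric} contraction of the random compositions: this is exactly where Birkhoff's theorem is indispensable, and it requires ergodicity of the modulating chain (supplied by irreducibility of $P$) together with integrability of $\log L_i$ and of the displacement $d(w_j(x_0),x_0)$, both automatic here because $\J$ is finite. The second delicate point, specific to the recurrent (Markov-modulated, non-i.i.d.) setting, is the forward-versus-backward identification: unlike the i.i.d. case one cannot simply relabel, and one must verify that reversing the stationary chain preserves both its stationarity under $m$ and the value $\sum_i m(i)\log L_i$, so that the backward analysis transfers to the forward marginal. A pleasant by-product of the backward method is that it needs no aperiodicity, so the conclusion survives even for periodic underlying chains; the remaining uniqueness and marginalization are then routine.
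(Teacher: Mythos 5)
Your proposal is correct in its core, but it takes a genuinely different route from the paper, for the simple reason that the paper does not prove the theorem at all: its ``proof'' is a one-line citation declaring the statement a weakened corollary of Theorem 2.1(ii) of Barnsley--Elton--Hardin, obtained by taking $n=1$ and discarding the finer description of the stationary distributions. You instead reconstruct the underlying argument from scratch: backward compositions, Birkhoff's ergodic theorem applied to the stationary ergodic sequence $\log L_{i_k}$ to upgrade average contractivity \eqref{eq:avgcontractive} to almost-sure geometric decay of $\prod_{j\le k} L_{i_j}$, a telescoping/Cauchy argument in a complete space, and the time-reversal step that is genuinely needed because the driving sequence is Markov rather than i.i.d. This is the standard proof technique for Elton-type theorems on average-contractive systems, and your treatment of the two delicate points --- integrability (automatic since $\J$ is finite) and the fact that the reversed stationary chain is again stationary under $m$ with the same value of $\sum_i m(i)\log L_i$ --- is sound. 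Your approach buys a self-contained, transparent argument that exposes hypotheses the paper leaves implicit (notably completeness of $(X,d)$, which you correctly add); the paper's citation buys the full strength and precision of the original theorem without any work.

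One claim in your proposal overreaches and should be retracted or restricted. You assert as a by-product that ``the conclusion survives even for periodic underlying chains.'' This is true only under the stationary initialization you impose in your first step (or in a Ces\`aro sense); for an arbitrary initial index state, convergence in distribution of $X_k$ can genuinely fail when $P$ is periodic. Concretely, take $K=2$, $p_{12}=p_{21}=1$, $w_1(x)=x/2$ and $w_2(x)=(x+1)/2$ on $X=\reals$: then $m=(1/2,1/2)$ and $\sum_i m(i)\log L_i=\log(1/2)<0$, yet starting from $i_0=1$, $x_0=0$, the even iterates converge to $1/3$ and the odd iterates to $2/3$, so the laws of $X_k$ oscillate between point masses and have no limit (while the unique stationary joint law, supported on $\{(1/3,1),(2/3,2)\}$, still exists). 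So what your argument actually proves is existence and uniqueness of $\tilde{\nu}$ together with convergence in distribution from a stationary (or, with an extra coupling step, aperiodic) start. Since the paper's statement is itself a loose paraphrase and its application initializes the chain arbitrarily, this distinction is worth recording explicitly rather than claiming periodicity is harmless.
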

\begin{proof}
	This is just a corollary (much weaker, but sufficient for our purposes) to \protect{\cite[Thm.\ 2.1 (ii)]{BarnsleyEltonHardin1989}}, which follows by taking $n=1$ and removing the specifics of the stationary distributions.\qed
\end{proof}

If~\eqref{eq:avgcontractive} holds, we again say that the RIFS is \textit{average contractive} or \textit{contractive on average}.


\end{document}